\let\proof\@undefined
\let\endproof\@undefined
\newtheorem{theorem}{Theorem}[section]
\newtheorem{proposition}[theorem]{Proposition}
\newtheorem{lemma}[theorem]{Lemma}
\theoremstyle{definition}
\newtheorem{definition}[theorem]{Definition}
\newtheorem{example}[theorem]{Example}
\theoremstyle{remark}
\newtheorem{remark}[theorem]{Remark}
\newcommand{\prb}[1]{\ensuremath{\mathbb{P}\left[ #1 \right] }} %Probability
\newcommand{\tl}{t_{\mathrm{lag}}}
\newcommand{\td}{t_{\mathrm{det}}}
\newcommand{\tc}{t_{\mathrm{conf}}}
\newcommand\oprocendsymbol{\hbox{$\bullet$}}
\newcommand\oprocend{\relax\ifmmode\else\unskip\hfill\fi\oprocendsymbol}
\title{\Large \bf Robot Monitoring for the Detection and Confirmation of Stochastic Events}  % Declares the document's title.
\author{Ahmad Bilal Asghar \qquad Stephen L. Smith \thanks{This research is partially
    supported by the Natural Sciences and Engineering Research Council
    of Canada (NSERC). }
  \thanks{The authors are with the Department of Electrical and
    Computer Engineering, University of Waterloo, Waterloo ON, N2L 3G1
    Canada  (\ahmad; \smith) }}    % Declares the author's name.
\date{}      % Deleting this command produces today's date.
\begin{document}           % End of preamble and beginning of text.

\maketitle                 % Produces the title.

\begin{abstract}
In this paper we consider a robot patrolling problem in which events arrive randomly over time at the vertices of a graph.  When an event arrives it remains active for a random amount of time.  If that time active exceeds a certain threshold, then we say that the event is a true event; otherwise it is a false event.  The robot(s) can traverse the graph to detect newly arrived events, and can revisit these events in order to classify them as true or false.  The goal is to plan robot paths that maximize the number of events that are correctly classified, with the constraint that there are no false positives.  We show that the offline version of this problem is NP-hard.  We then consider a simple patrolling policy based on the traveling salesman tour, and characterize the probability of correctly classifying an event.  We investigate the problem when multiple robots follow the same path, and we derive the optimal (and not necessarily uniform) spacing between robots on the path.
\end{abstract}

\section{Introduction}  % Produces section heading.

Consider the following motivating example.  An autonomous robotic vehicle traverses a parking lot, issuing tickets to vehicles that have overstayed the allowed amount of parking time $T$. The robot goes from spot to spot, recording the time and license plate numbers of the parked vehicles. If a vehicle has been present for more than $T$ time units after it was first spotted by the robot, then it gets a ticket. However, there is a possibility that some vehicles overstay their allowed parking time but are not ticketed. Our goal is to define a monitoring policy for the robot which minimizes the number of un-ticketed, overstaying vehicles. % A variation of the problem can be to look for maximum allowed parking time $T$ such that the system remains stable.

Formally, the \emph{event detection and confirmation problem} considered in this paper is as follows.  We are given a weighted graph.  Events arrive at the vertices of the graph, and a robot, or group of robots, can patrol the graph by traversing its edges.  Once an event arrives at a particular vertex it remains active for a randomly distributed amount of time.  If the event remains active for more than a given threshold time $T > 0$, then we say it is a \emph{true event}, otherwise it is a \emph{false event}.  For a robot to verify that an event is true, it must first detect the event by visiting the vertex, and then must revisit the vertex at least $T$ time units later to confirm the event.  Thus, the goal is for the robots to maximize the expected number of true events that are successfully confirmed.  This is a classification problem in which false positives are not permitted:  Each event is initially classified as false, and it can be classified as true only if it is confirmed.

\emph{Related work}: While the proposed event detection and confirmation problem has not, to our knowledge, been directly studied there are several closely related problems.  In the patrolling problem~\cite{chevaleyre2004theoretical, Agmon2008Multi, baseggio2010distributed}, the goal is to monitor an environment or boundary using one or more robots/sensors.  The performance criteria is to minimize the maximum time between visits to any region in the environment.  In~\cite{chevaleyre2004theoretical}, the problem is considered for multiple robots, and it is shown that good patrolling performance can be achieved by computing a single traveling salesman tour (TSP)~\cite{BK-JV:07}, and then equally distributing the robots along this tour.

In~\cite{Pasqualetti2012Cooperative}, the authors look at cooperative patrolling problems and give approximation algorithms for certain classes of discrete environments. In~\cite{Alamdari2014Persistent} the patrolling is extended to environments in which each region has a different importance level, and the goal is to minimize the time between visits to a region, weighted by that regions importance.  The work in this paper can be thought of as a natural extension of patrolling in which an action must be taken if an event is detected during the patrol (that action being confirmation).

Our problem is also related to the TSP with time windows~\cite{savelsbergh1985local, GL:09}, where the input is a graph along with a time window assigned to each vertex.  The goal is to find the shortest tour that visits each vertex exactly once, and within its time window.  We show that the event confirmation aspect of our problem is closely related to TSP with time windows, since each event must be confirmed at least $T$ time units after detection, but before the event expires.

Another closely related problem is the pickup and delivery problem~\cite{ropke2006adaptive}, where one seeks to pickup a set of customers at their desired origin locations and drop them off at their desired destination locations, all within their specified time windows.  Our problem can be thought of as a variation in which the pickup time (i.e., the event arrival time) is unknown to the robot, the pickup and destination locations coincide, and the dropoff time window depends on the time that the pickup occurred (i.e., the event was detected).

The stochastic aspect of the problem bears a close resemblance to dynamic vehicle routing (DVR)~\cite{Bullo2011Dynamic}, where spatially distributed customers arrive stochastically over time, and the goal is to minimize the expected time between a customers arrival and the time it is visited by a vehicle.  The most closely related work in this area is~\cite{MP-NB-EF-VI:08}, in which the customers exit the system if they are not visited within a time window.  However, DVR differs from the proposed work in three regards: i) the environment is a Euclidean space rather than a graph, ii) the customer is known to the vehicles upon arrival, and iii) a second confirmation visit is not required.

\emph{Contributions:} In this paper we introduce the event detection and confirmation problem.  There are three main contributions.  First, we characterize the complexity of the problem by relating its offline counterpart to the TSP with time windows.  Second, we propose a simple periodic visit strategy based on the TSP and analyze the probability of confirming a true event.  Third, we give some insight into the multi-robot problem, and show that unlike traditional patrolling~\cite{chevaleyre2004theoretical} when robots are all placed on the same path, it is not always optimal for them to be equally spaced.

\emph{Organization:}  In Section~\ref{sec:problem_statement} we formally define the problem and in Section~\ref{sec:offline} we give its corresponding offline version and show it is NP-hard. In Section~\ref{sec:single_vertex} we consider the single robot problem and in Section~\ref{sec:multiple_robots} we give some initial analysis into the multirobot setting. Finally, in Section~\ref{sec:conclusions} we give some future directions for research.

\subsection{Preliminaries}
\label{sec:preliminaries}
We require a few basic properties of the Poisson and exponential distributions~\cite{GG-DR:01}.  The \emph{exponential distribution} with parameter $\mu$ is a continuous distribution with a probability density function of
\[
f(x) = \begin{cases}
\mu e^{-\mu x} & \text{if $x \geq 0$}, \\
0 & \text{if $x < 0$}.
\end{cases}
\]
The cumulative distribution function of an exponential random variable is $F(x) = 1 - e^{-\mu x}$ if $x\geq 0$ and $F(x) = 0$ otherwise.

A \emph{Poisson process} with parameter $\lambda$ is a stochastic counting process such that the time between successive events is exponentially distributed.  The expected number of event arrivals in a time interval $[t_1,t_2]$ is $\lambda (t_2-t_1)$.  The Poisson process also satisfies the property of \emph{stationary increments} where the number of arrivals in an interval of time is independent of the number of arrivals prior to that interval.  

The number of arrivals in the time interval $(0,t]$ (and thus any interval of length $t$) is denoted $N(t)$ and distributed according to the Poisson distribution:
\[
\prb{N(t) = k} = \frac{(\lambda t)^ke^{-\lambda t}}{k!}.
\]

The following result will be useful in our analysis.

\begin{lemma}[Poisson Arrival Time Distribution,~\cite{ross1996stochastic}]
\label{lem:arrival_dist}
  Given that $k$ events arrived in the time interval $(a,b]$, the times $t_1,t_2,\ldots,t_k$ of these arrivals, considered as unordered random variables, are independent and uniformly distributed on $(a,b]$. 
\end{lemma}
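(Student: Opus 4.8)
The plan is to reduce to the case $(a,b]=(0,t]$ with $t=b-a$ using the stationary-increments property of the Poisson process, and then to compute the joint conditional density of the arrival times explicitly and recognize it. First I would recall that the interarrival times $X_1,X_2,\ldots$ are i.i.d.\ exponential with parameter $\lambda$, so the $i$-th arrival time is $S_i=X_1+\cdots+X_i$, and these are intrinsically ordered, $S_1<S_2<\cdots$. The goal is to show that, conditioned on $\{N(t)=k\}$, the vector $(S_1,\ldots,S_k)$ is distributed as the order statistics of $k$ independent $\mathrm{Uniform}(0,t)$ variables; forgetting the order then gives the claim.

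Next I would express, for $0<s_1<\cdots<s_k<t$, the event $\{S_1\approx s_1,\ldots,S_k\approx s_k,\ N(t)=k\}$ in terms of the interarrival times as $\{X_1\approx s_1,\ X_2\approx s_2-s_1,\ \ldots,\ X_k\approx s_k-s_{k-1},\ X_{k+1}>t-s_k\}$. By independence and the exponential density, the corresponding joint density is
\[
\lambda e^{-\lambda s_1}\cdot\lambda e^{-\lambda(s_2-s_1)}\cdots \lambda e^{-\lambda(s_k-s_{k-1})}\cdot e^{-\lambda(t-s_k)}=\lambda^k e^{-\lambda t},
\]
which is constant over the simplex $\{0<s_1<\cdots<s_k<t\}$. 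Dividing by $\prb{N(t)=k}=(\lambda t)^k e^{-\lambda t}/k!$ yields a conditional density equal to $k!/t^k$ on that simplex and zero elsewhere.

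Then I would observe that $k!/t^k$ restricted to $\{0<s_1<\cdots<s_k<t\}$ is exactly the joint density of the order statistics of $k$ i.i.d.\ $\mathrm{Uniform}(0,t)$ random variables. Hence, conditioned on $N(t)=k$, the ordered arrival times are distributed as those order statistics, and therefore the $k$ arrival times viewed as an unordered collection are independent and $\mathrm{Uniform}(0,t)$; translating back by $a$ gives $\mathrm{Uniform}(a,b]$, as claimed.

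The main point to handle carefully is the bookkeeping around ordering: the process produces arrival times that are automatically sorted, so the factor $k!$ relating the density on the ordered simplex to the symmetric density on $(0,t)^k$ must be tracked exactly — this is precisely why the conditional density comes out to $k!/t^k$ rather than $1/t^k$, and why the unordered times are i.i.d.\ uniform. An essentially equivalent alternative, if one prefers to avoid interarrival times, is to partition $(0,t]$ into many tiny subintervals, use independent increments to compute the probability of exactly one arrival in each of $k$ prescribed subintervals and none in the remaining ones, divide by $\prb{N(t)=k}$, and pass to the limit; this produces the same uniform density but requires the same care with the combinatorial factor. Since this is a standard result, I would in the paper simply cite it, but the argument above is the self-contained version.
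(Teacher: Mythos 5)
The paper does not actually prove this lemma; it states it as a known result with a citation to Ross, so there is no in-paper argument to compare against. Your proof is the standard one from that reference --- writing the event $\{S_1\approx s_1,\ldots,S_k\approx s_k,\,N(t)=k\}$ in terms of the exponential interarrival times to get the constant joint density $\lambda^k e^{-\lambda t}$ on the simplex, dividing by $\prb{N(t)=k}$ to obtain $k!/t^k$, recognizing this as the order-statistics density of $k$ i.i.d.\ uniforms, and reducing $(a,b]$ to $(0,t]$ by stationarity --- and it is correct, including the careful tracking of the $k!$ factor that relates the ordered and unordered descriptions.
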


A consequence of this result is that if we know an event arrived in an interval of time, then its arrival time is uniformly distributed over that time interval.

% The above result can be generalized to show that if there are $k$ arrivals in a time interval $(t_1, t_2]$, then times of these arrivals have the same joint distribution as $k$ independent and identically distributed random variables on the interval $(t_1, t_2]$.

\section{Problem Statement and Hardness}
\label{sec:problem_statement}
%Similarly, a random process governs the staying time or departure of these events. This departure process can be any general random process, however we use a Poisson random process in our analysis,

In this section we introduce the event detection and confirmation problem and we characterize its hardness.   

\subsection{Problem Statement}

The Event Detection and Confirmation problem is defined on an undirected weighted graph $G=(V,E,w)$, where $V$ is the vertex set, $E$ is the set of edges and $w:E\rightarrow\mathbb{R}$ represents edge weights. The vertices depict the locations to be monitored by $m\geq1$ robots. We take the metric closure~\cite{Alamdari2014Persistent} of $G$ in order to obtain a complete graph, in which the length of each edge is equal to the shortest path distance in the original graph.  For simplicity we will refer to this complete graph as $G=(V,E,w)$.

Events arrive at each vertex $v\in V$ according to a Poisson random process~\cite{GG-DR:01} with a parameter $\lambda_v$. Similarly, we assume that the activity period of an event at a vertex is exponentially distributed with parameter $\mu_v$.\footnote{In queueing theory, the Poisson distribution and exponential distribution are often used to model customer arrival rates and customer service times, respectively~\cite{Kleinrock1975Queueing}.  Most of the analysis in this paper holds for more general distributions:  the ability to obtain closed-form expressions, however, leverages these specific distributions.} The events are distinct and they can be identified by the robots. Moreover, only one event can be active at a vertex at a time, and the next event at that vertex can only arrive after the previous one has gone inactive. The arrivals and active times of events at different vertices are independent. 

There is also a critical time $T$ as input to the problem. We call an event a \emph{true event} if it remains active for an amount of time that is greater than or equal to $T$. The robots, while on their patrolling path, perform two tasks: \emph{detection} and \emph{confirmation}. The detection of an event is discovering it for the first time at a vertex, and the confirmation is observing an event at a vertex after it has been active for at least time $T$. The robots can classify an event as true if and only if they confirm that event. Notice that if a true event becomes inactive before being confirmed, it cannot be classified by the robot as a true event.

When a robot reaches a vertex in its tour, it faces one of the following scenarios:
\begin{enumerate}
\item The vertex is empty (no event at the vertex): then, the robot can delete the event from its database which was recorded to be at that vertex (if any);
\item There is a new event at the vertex: then the robot stores it against that vertex with the current time stamp;
\item There is an event at the vertex which was detected some previous check at that vertex:  In this case the robot looks up the time stamp of that event and compares it with current time to see whether it is a true event or not.
\end{enumerate}

\textbf{Event detection and confirmation problem:} Find patrolling paths for the robots to minimize the probability of incorrectly classified events. The problem does not allow the robots to classify a false event as true, so the optimization task can be stated as maximizing the probability of correctly classified true events.
\begin{figure}
  \centering
 \includegraphics[width=0.9\linewidth]{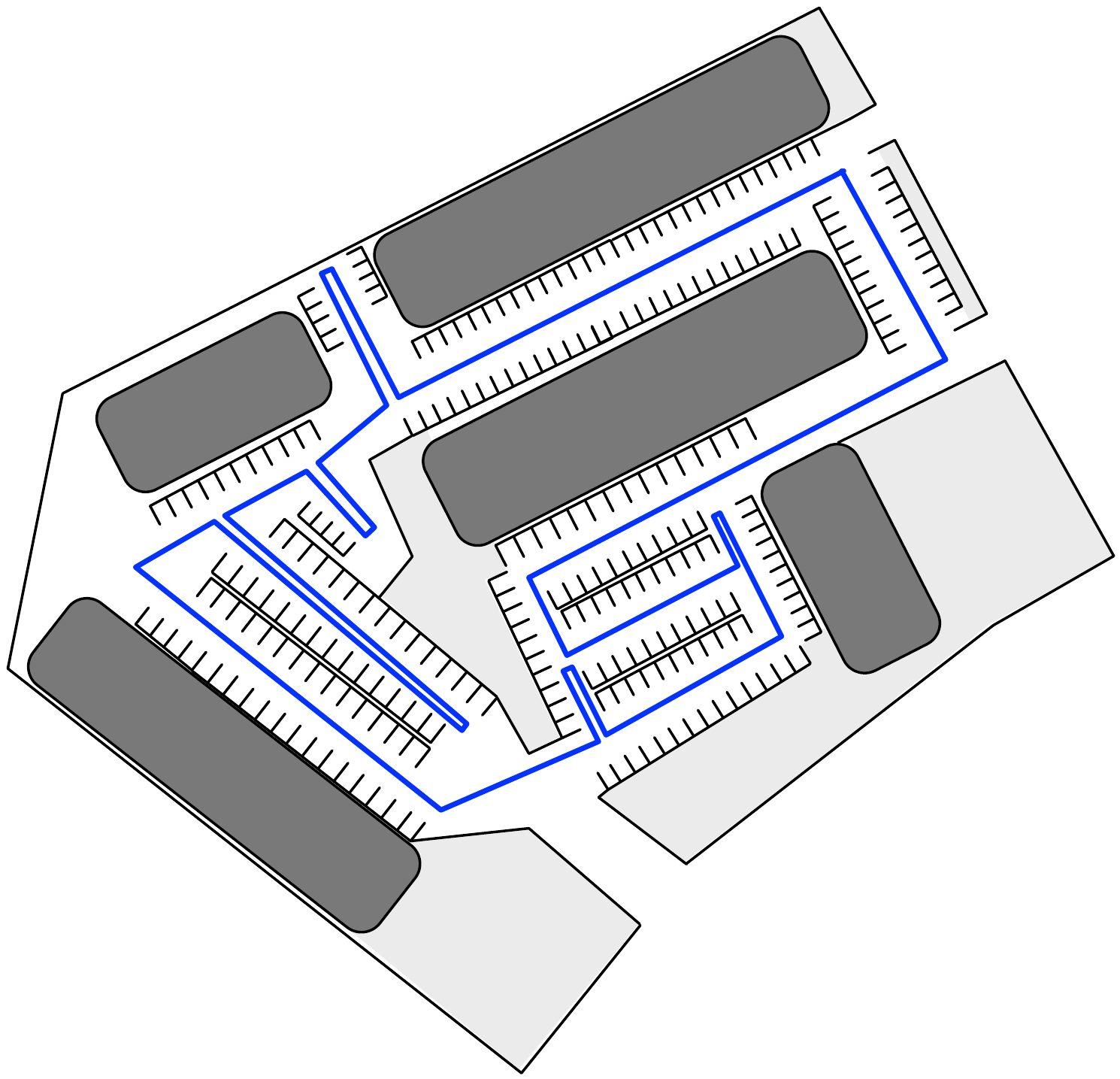}
  \caption{A parking lot environment and a possible patrolling tour based on the TSP.  Here we assume that when moving along an aisle in the parking lot, the robot can view the parking spots on either side of the aisle.}
  \label{fig:parking_example}
\end{figure}

\begin{example}[Parking Enforcement]
The problem is analogous to the problem of finding routes in a parking lot where cars are arriving and departing according to some random process and we have to ticket the cars staying more than the allowed parking time. The vertices of the graph represent individual parking spaces and the arrival and activity time of events at a vertex is equivalent to arrival and staying time of vehicles at a parking spot. The cars are identifiable by their license plates and there can be only one car at a spot. Moreover, the robot can ticket a car only if the time between its detection and ticketing is more than the allowed parking time. An example parking lot environment and a possible patrolling route are shown in Figure~\ref{fig:parking_example}. \oprocend
\end{example}

\subsection{Off-line Version and Hardness}
\label{sec:offline}

To better understand the complexity of the Event Detection and Confirmation Problem, let us consider its off-line version.  In the off-line problem all the required data is available beforehand. So the arrival times and activity periods of events at each vertex of the graph are known before devising a patrolling path. The input to the off-line problem are: a weighted graph $G=(V,E,w)$, the critical time $T$, and $M$ number of events where each event is given by $E_i=(v,t_s,t_f), v\in V, i\in \{1,2,..,M\}$. The vertex of the event is represented by $v$, and $[t_s,t_f]$ gives the interval during which the event remains active. More than one event can arrive at a vertex, but their active intervals need to be disjoint. Given this data, the events with active times less than the minimum staying time $T$ can be ignored. For the other(true) events, the monitoring robot needs to visit their vertex twice: once after their arrival to detect them and then before $t_f$ but at least time $T$ after the first visit to confirm them. This means that we can assign two time windows at each vertex and the robot has to visit each vertex during those time windows. We name the windows as detection window and confirmation window, given by $[t_s,t_f-T]$ and $[t_{det}+T,t_f]$ respectively,  where $t_{det}\in[t_s,t_f-T]$ represents the actual visit time of vertex $v$ by the robot during the first window. Note that the confirmation window is dynamic in the sense that its length depends on the visit time during the detection window.

\begin{proposition}[Hardness of Offline Problem]
The problem of finding a feasible tour for the off-line version of Event Detection and Confirmation Problem(EDC) is NP-Complete.
\end{proposition}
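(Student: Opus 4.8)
The plan is to prove both directions of NP-completeness: membership in NP by a bounded-certificate argument, and NP-hardness by a reduction from the feasibility version of the Traveling Salesman Problem with Time Windows (TSPTW), which is itself NP-complete~\cite{savelsbergh1985local}.

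\emph{Membership in NP.} Although a patrolling path is a priori of unbounded length, a tour is feasible for the offline EDC instance precisely when, for each of the $M$ true events, it contains one visit inside the detection window $[t_s,t_f-T]$ and one visit inside the induced confirmation window $[t_{det}+T,t_f]$. I would therefore argue that if a feasible tour exists, then one exists that consists of at most $2M$ of these ``useful'' visits joined by direct edges, with idle waiting allowed at a vertex: by the triangle inequality in the metric closure, replacing the original sub-walk between two consecutive useful visits by the direct edge never increases travel time, so all arrival times move (weakly) earlier, which preserves every upper window endpoint, while every lower endpoint can be met by waiting. A certificate is then the list of at most $2M$ pairs $(\text{vertex}, \text{time})$; checking it amounts to verifying that consecutive pairs respect the edge weights and that, for each event, its first visit lies in $[t_s,t_f-T]$ and its second lies in $[t_{det}+T,t_f]$, all in polynomial time.

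\emph{NP-hardness.} Given a TSPTW instance on a complete metric graph $G=(V,E,w)$ with a window $[a_v,b_v]$ at each vertex $v$, I would construct an EDC instance on the same graph $G$, with critical time $T:=\max_{v\in V}b_v+\max_{e\in E}w(e)+1$ and, at each vertex $v$, a single event $E_v=(v,\,a_v,\,b_v+T)$. Every $E_v$ is a true event, since its active length $b_v+T-a_v$ is at least $T$; its detection window is exactly the TSPTW window $[a_v,b_v]$, and once it is detected at some time $t_{det}\in[a_v,b_v]$ its confirmation window is $[t_{det}+T,\,b_v+T]$. If the TSPTW instance admits a feasible route $v_1,v_2,\ldots,v_n$ with visit times $\tau_i\in[a_{v_i},b_{v_i}]$ obeying the travel constraints, then the EDC tour that traverses this route once as a detection pass at times $\tau_i$ and then repeats it as a confirmation pass at times $\tau_i+T$ is feasible: each detection lands in $[a_{v_i},b_{v_i}]$, each confirmation lands in $[\tau_i+T,\,b_{v_i}+T]$ because $\tau_i\le b_{v_i}$, travel within each pass is inherited from the route, and the slack $\max_{e}w(e)$ built into $T$ guarantees there is time to move from $v_n$ to $v_1$ between the two passes. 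Conversely, from any feasible EDC tour, order the (unique) detection visits by time; their times lie in the corresponding $[a_v,b_v]$, and by the triangle inequality consecutive detection visits are separated by at least the direct edge weight, so they constitute a feasible TSPTW route. The construction has polynomial size, so EDC feasibility is NP-hard, and with the membership above it is NP-complete.

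The step I expect to be the crux is handling the \emph{dynamic} confirmation window $[t_{det}+T,t_f]$, whose lower endpoint is revealed only after the detection visit. The idea of shifting the entire detection route rigidly by $T$ to form the confirmation pass neutralizes this difficulty by forcing each confirmation visit to occur exactly at the earliest point of its own window; the care needed is to check that $T$ can be chosen polynomially large so that the two passes do not overlap in time (whence the $\max_{e}w(e)$ term) while every event still remains a true event. I would also want to settle two routine but necessary points: the precise notion of a ``feasible tour'' for offline EDC (every true event both detected and confirmed within its windows), and the standard equivalence, under the metric closure, between the Hamiltonian and the ``visit-each-vertex-at-least-once'' variants of TSPTW.
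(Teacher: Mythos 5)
Your proposal is correct and follows essentially the same route as the paper: a reduction from TSPTW feasibility in which each vertex receives one event whose detection window reproduces the TSPTW window (by setting $t_f = b_v + T$ for a suitably large $T$), with the confirmation pass obtained by rigidly shifting the detection route by $T$. The only differences are cosmetic --- you pad $T$ with $\max_e w(e)+1$ where the paper uses $\sum_{i,j} w_{ij} + \max_i l_i - \min_i e_i$, and your NP-membership argument (shortcutting to at most $2M$ useful visits) is actually more careful than the paper's one-line assertion.
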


\begin{proof}
The following reduction from the decision version of {\sc traveling salesman problem with time windows(TSPTW)} to the off-line EDC shows that the off-line EDC problem is at least as hard as the problem of finding a feasible solution in TSPTW, which is known to be NP-Complete~\cite{savelsbergh1985local}.

The TSPTW takes as an instance a weighted graph $G=(V',E',w')$, and a time window for each vertex $i\in V'$, denoted by $[e_i, l_i]$. The decision version of the problem is to find whether a tour exists which visits each vertex $i\in V'$ of the graph within the time window $[e_i, l_i]$ associated with that vertex. 

Given an instance of TSPTW, we generate the following instance of off-line EDC. 
\begin{equation*}
\begin{aligned}
(V,E,w)&=(V',E',w')\\
T&=\sum_{i,j}w_{ij} + \max_{i\in V'}\{l_i\} - \min_{i\in V'}\{e_i\}\\
E_i=(v,t_s,t_f)&=(i,e_i,l_i+T) \quad \forall i\in V'
\end{aligned}
\end{equation*}

The detection window for the off-line EDC at vertex $v$ will therefore be
\begin{equation*}
[t_s,t_f-T]=[e_i,l_i].
\end{equation*}

If a patrolling path for off-line EDC exists, that means that the robot can visit each vertex in its detection window, which is same as the time window for TSPTW.

Similalry, if a feasible solution to the TSPTW exists, then the robot can visit all the vertices of EDC during their detection windows. Let that path be called `detection path'. Then it can also visit all the vertices in their confirmation windows by following the detection path with a time lag of $T$. This is possible because the robot will cover all the vertices in $t_1 \leq\max_{i}\{l_i\} - \min_{i}\{e_i\}$, and it can go to the first vertex on the detection path from the last in $t_2 \leq\sum_{i,j}w_{ij}$. So, once it reaches the starting vertex of the detection path, it can wait for $T-t_1-t_2$ time and then follow the same path to visit all the vertices in their confirmation windows.

Therefore, the decision version of TSPTW is true if and only if off-line EDC is true.  Moreover, given any certificate of off-line EDC, it can be easily checked in polynomial time whether it visits all the vertices in their corresponding detection and confirmation windows. Hence, off-line EDC is NP-Complete.  
\end{proof}

The above result shows us that it is computationally intractable to even determine a feasible patrolling path given all the problem data beforehand.  This implies that the optimization version of the offline problem is NP-hard.  Thus, we do not expect there to exist a tractable algorithm for optimally solving the online problem, in which events arrive over time.    In the following section we look at the probability of confirmation at a single vertex in the graph.  Using this we can analyze the performance of a policy based on a traveling salesman problem (TSP) tour.

\section{Analysis for a Single Vertex}
\label{sec:single_vertex}
Let us consider a deterministic patrolling policy which periodically visits each vertex, and let the visit period for a vertex $v$ be $\tau$. Inter-arrival and staying times of events at vertex $v$ are distributed exponentially with the parameters $\lambda$ and $\mu$ respectively (we drop the subscript $v$ in this section for simplicity of notation). Since the arrival and departure process at a vertex are independent of the states of other vertices, we can focus the analysis on a single vertex. 

\subsection{Confirmation Avoiding Interval for True Events}

Our end goal is to maximize the probability of correctly classified true events. There is a chance that a true event becomes inactive without being confirmed, because the robot does not know the exact arrival time of the event and it can only visit the vertex of the event periodically. In the following we characterize the time interval on which a true event can become inactive and avoid confirmation.

\begin{figure}  
\label{fig_stay_time}
\centering  
\includegraphics[width=\linewidth]{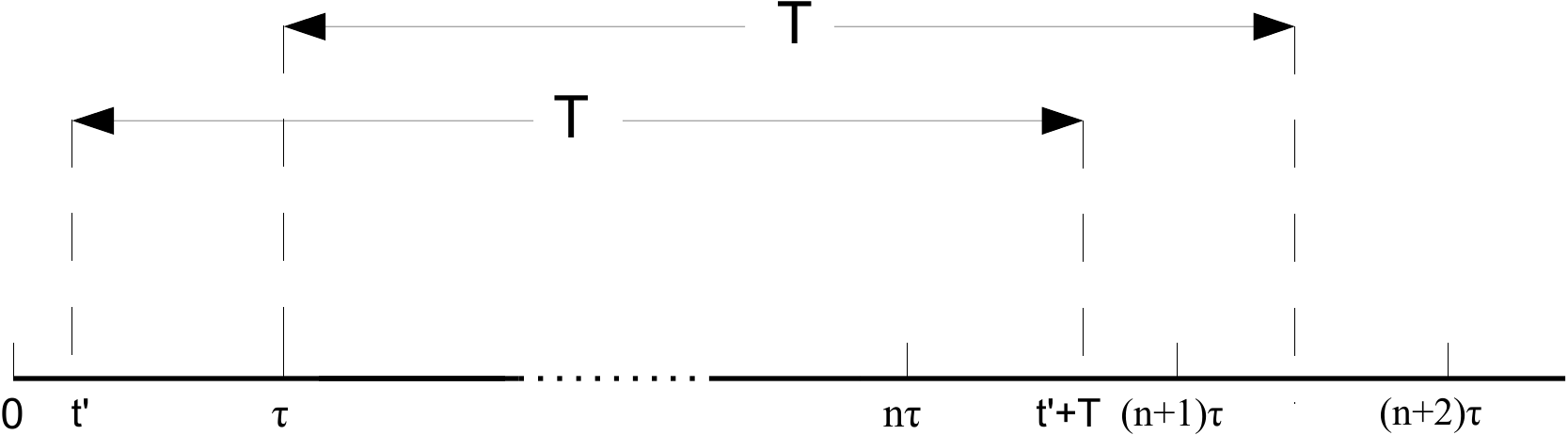}  
\caption{The events remaining active until $t'+T$ are true, but they cannot be confirmed unless they stay until $(n+2)\tau$.}  
\end{figure}  

\begin{proposition}[False Negatives]
\label{prop:false_negative_interval}
Suppose an event arrives at a vertex between two consecutive visits made by the robot at $0$ and $\tau$, and the arrival time of the event is given by $t'\in(0,\tau]$.  Then, if that event becomes inactive in the time interval
\begin{equation}
\label{eq:conf_interval}
\big(t'+T,(n+2)\tau\big), \; \text{where} 
\end{equation}
\begin{equation}
\label{eq:n_def}
 n=\begin{cases}
    \frac{T}{\tau}-1, & \text{if $T$ is a multiple of $\tau$}\\
    \big\lfloor\frac{T}{\tau}\big\rfloor, & \text{otherwise},
  \end{cases}
\end{equation}
it will be a true event which can not be confirmed.
\end{proposition}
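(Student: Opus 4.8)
The plan is to follow a single event through the periodic visit schedule $0,\tau,2\tau,\dots$ and pin down the earliest visit at which a confirmation can possibly occur. Write the active interval of the event as $[t',t'+S)$, where $t'\in(0,\tau]$ is the arrival time and $S$ is its (exponentially distributed) staying time, so the event becomes inactive at time $t'+S$; the hypothesis of the proposition is that $t'+S\in\big(t'+T,(n+2)\tau\big)$. The first assertion is then immediate: $t'+S>t'+T$ forces $S>T$, so by the definition of a true event the event is true. Everything that remains is to rule out a confirmation.

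For that I would argue in two short steps. First, locate the detection visit: the robot is absent at time $0$ and the event arrives in $(0,\tau]$, so the earliest the robot can discover the event is its visit at time $\tau$; hence the detection time is at least $\tau$ (it is exactly $\tau$ if the event is still active then, and if the event has already expired by $\tau$ it is never detected and the conclusion is trivial). By the definition of confirmation, any confirmation visit must therefore occur at a visit time $k\tau$ with $k\tau\ge\tau+T$, i.e.\ $k\ge 1+T/\tau$, and of course $k\ge 2$. Second, I would identify the smallest such $k$ and match it to $n+2$: if $T$ is an integer multiple of $\tau$, then $1+T/\tau$ is already an integer $\ge 2$, so the smallest admissible $k$ is $1+T/\tau=n+2$ with $n=T/\tau-1$; otherwise the smallest admissible $k$ is $\lceil 1+T/\tau\rceil=\lfloor T/\tau\rfloor+2=n+2$ with $n=\lfloor T/\tau\rfloor$. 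This is precisely the case split in~\eqref{eq:n_def}, and in both cases the earliest confirmation-eligible visit is at time $(n+2)\tau$.

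To close, I would observe that by hypothesis the event is inactive by time $t'+S<(n+2)\tau$, hence it is inactive at the visit at $(n+2)\tau$ and at every later visit, so the robot never observes it active at a visit that is at least $T$ after detection; therefore it cannot be confirmed, while $S>T$ makes it a true event, which is the claim. The proposition is essentially a bookkeeping statement, so I do not expect a genuine obstacle; the only place requiring care is the second step, namely getting the floor/ceiling arithmetic right and checking that the boundary case $T=n\tau$ (where the confirmation visit lands exactly at $\tau+T$) is captured by the first branch of~\eqref{eq:n_def}, since that is where an off-by-one could creep in.
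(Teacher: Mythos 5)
Your proposal is correct and follows essentially the same route as the paper's proof: detection is pinned to the visit at $\tau$, the earliest confirmation-eligible visit is identified as $(n+2)\tau$ (the paper does this via the inequality $n\tau < T \leq (n+1)\tau$, you via the equivalent ceiling computation $\lceil 1+T/\tau\rceil = n+2$), and the event's expiry before $(n+2)\tau$ rules out confirmation while $S>T$ makes it true. Your explicit handling of the degenerate case where the event expires before even being detected at $\tau$ is a minor point the paper leaves implicit, but otherwise the two arguments coincide.
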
 

\begin{proof}
The starting point of the interval $(t'+T,(n+2)\tau)$  is trivial since
the event will become true after $t=t'+T$. For the
end point, notice that the robot detected the event at $t=\tau$, and it will only be able to confirm
the event on times that are integer multiples of $\tau$. By the definition of $n$, $
n\tau<T\leq(n+1)\tau $. So when the robot observes the same event at
$(n+2)\tau$, it confirms that is has been there for more than $T$, since $
(n+2)\tau-\tau=(n+1)\tau\geq T $. Moreover, there can be cases for some $T$ and $t'$ when $t'+T<(n+1)\tau$, as shown in Figure~\ref{fig_stay_time}, but since the robot detected the event at $\tau$ and $(n+1)\tau-\tau=n\tau<T$, the robot does not know that the event has been active for more than $T$ and cannot confirm it.
\end{proof}

The events which become inactive in the interval (\ref{eq:conf_interval}) will be true events, but cannot be correctly classified by the robot as true. We will use this fact along with the exponential active times of the events in the following section to calculate the chances of correctly classifying a true event. 

\subsection{Probability of Correctly Classifying True Events}

The events which were detected at $t=\tau$ will be classified as true if they remain active until $t=(n+2)\tau$, as shown in Proposition~\ref{prop:false_negative_interval}. If the robot detects an event, then it knows that the event arrived in the interval between the last two visits to its vertex. By the property of stationary increments, the time scale can be shifted to say that the arrival time of the event is given by $t'\in(0,\tau]$. Using the consequence of Lemma~\ref{lem:arrival_dist}, the arrival time $t'$ is uniformly distributed over $(0,\tau]$. We write this distribution of $t'$ in $(0,\tau]$ as
\begin{equation}
\label{eq:arrival_dist}
\begin{aligned}
f(t')&=\frac{1}{\tau} & \text{for } t'\in(0,\tau].
\end{aligned}
\end{equation}

 So, if a robot knows than an event arrived within some two visits, it could have arrived at any time between those visits with equal probability. We will use this uniform density along with the interval(\ref{eq:conf_interval}) to find the probability of confirming true events.

\begin{proposition}[Probability of Successful Classification]
The probability of confirming a true event at vertex $v$ with arrival rate $\lambda$, departure rate $\mu$, and a robot with visit period $\tau$ is
\begin{equation}
\label{prob_confirm}
\prb{\texttt{\emph{confirm|v}}}=\frac{e^{-\mu[(n+2)\tau - T]}(e^{\mu\tau}-1)}{\mu\tau},
\end{equation}
where $n$ is defined in (\ref{eq:n_def}).
\end{proposition}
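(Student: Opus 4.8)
The plan is to reduce the claim to a one‑line integral, using Proposition~\ref{prop:false_negative_interval} together with the memorylessness of the exponential distribution. Write $X$ for the (random) length of the active period of the event, so $X\sim\mathrm{Exp}(\mu)$, and write $t'$ for its arrival time. Calling the event ``true'' is exactly the event $\{X\geq T\}$, so ``the probability of confirming a true event at $v$'' means we work conditionally on $\{X\geq T\}$. By the consequence of Lemma~\ref{lem:arrival_dist} recorded in~(\ref{eq:arrival_dist}), $t'$ is uniform on $(0,\tau]$; since $t'$ is independent of $X$, conditioning on $\{X\geq T\}$ does not change this, so $t'$ is still uniform on $(0,\tau]$.

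Next I would translate ``confirmed'' into a constraint on $X$. By Proposition~\ref{prop:false_negative_interval} and its interval~(\ref{eq:conf_interval}), a true event that arrives at $t'$ fails to be confirmed exactly when it becomes inactive before time $(n+2)\tau$; equivalently, it \emph{is} confirmed exactly when $X>(n+2)\tau-t'$. Note this condition already encodes detection: because $(n+2)\tau>\tau$, an event still active at $(n+2)\tau$ was active at the visit at time $\tau$ and was hence detected there. Therefore, conditioned on the event being true,
\[
\prb{\texttt{confirm}\mid v}=\mathbb{E}_{t'}\!\left[\,\prb{X>(n+2)\tau-t'\mid X\geq T}\,\right].
\]

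The only genuinely non‑mechanical step is checking that memorylessness is legal inside the conditional probability. Since $t'\leq\tau$ we have $(n+2)\tau-t'\geq(n+1)\tau$, and from the definition of $n$ in~(\ref{eq:n_def}) one verifies $(n+1)\tau\geq T$ in both cases (with equality when $T$ is a multiple of $\tau$, strictly otherwise); hence $(n+2)\tau-t'\geq T$. Memorylessness of the exponential distribution then yields, for each fixed $t'\in(0,\tau]$,
\[
\prb{X>(n+2)\tau-t'\mid X\geq T}=\prb{X>(n+2)\tau-t'-T}=e^{-\mu\left((n+2)\tau-T-t'\right)}.
\]
Averaging against the uniform density $1/\tau$ on $(0,\tau]$ and evaluating the elementary integral,
\[
\prb{\texttt{confirm}\mid v}=\int_0^{\tau}\frac1\tau\,e^{-\mu\left((n+2)\tau-T-t'\right)}\,dt'
=\frac{e^{-\mu\left[(n+2)\tau-T\right]}\left(e^{\mu\tau}-1\right)}{\mu\tau},
\]
which is~(\ref{prob_confirm}).

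I expect the main obstacle to be conceptual rather than computational: correctly identifying the conditioning event. ``A true event'' forces conditioning on $\{X\geq T\}$, and it is precisely this that supplies the factor $e^{\mu T}$ by which~(\ref{prob_confirm}) differs from the ``raw'' survival probability $\prb{X>(n+2)\tau-t'}$ integrated over $t'$. The remaining care is in the inequality $(n+2)\tau-t'\geq T$ that licenses memorylessness, which is where the two‑case definition of $n$ in~(\ref{eq:n_def}) does its work.
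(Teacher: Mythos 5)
Your proof is correct and follows essentially the same route as the paper's: both condition on the arrival time $t'$, identify ``confirmed'' with survival past $(n+2)\tau$ via Proposition~\ref{prop:false_negative_interval}, divide by the probability of being true, and average against the uniform density $1/\tau$ on $(0,\tau]$. The only cosmetic difference is that the paper writes the conditional probability as a ratio of two explicit integrals while you invoke memorylessness (and you are somewhat more careful in justifying that $(n+2)\tau - t' \geq T$ and that conditioning on $\{X\geq T\}$ leaves $t'$ uniform, both of which the paper leaves implicit).
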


\begin{proof}
According to the confirmation avoiding interval given in (\ref{eq:conf_interval}), the events arriving at time $t'\in(0,\tau]$ and departing after $(n+2)\tau$ will be confirmed by the robot. Using the exponential staying time distribution ,we can find the probability of confirming a true event given that it arrived at time $t'\in(0,\tau]$.

\begin{equation}
\begin{split}
\label{eq:prob_overstay_t}
\prb{\texttt{confirm|v and t'}}&=\frac{\displaystyle\int_{(n+2)\tau}^{\infty}\mu e^{-\mu (t-t')}dt}{\displaystyle\int_{T}^{\infty}\mu e^{-\mu t}dt},\\
	   &=e^{-\mu[(n+2)\tau - T]}e^{\mu t'}
\end{split}
\end{equation}

The numerator in (\ref{eq:prob_overstay_t}) represents the events that will stay long enough to be confirmed, and the denominator represents all the events that are true.  Using the arrival time density in the interval $(0,\tau]$ from (\ref{eq:arrival_dist}), $f(t')=\frac{1}{\tau}$ where $t'\in(0,\tau]$, and un-conditioning the arrival time

\begin{equation*}
\label{eq:prob_caught_1}
\prb{\texttt{confirm|v}}=\int_{0}^{\tau}\prb{\texttt{confirm|v and t'}}f(t')dt',
\end{equation*}
 we get
\begin{equation*}
\begin{split}
\prb{\texttt{confirm|v}}&=\int_{0}^{\tau}\frac{e^{-\mu[(n+2)\tau - T]}e^{\mu t'}}{\tau}dt'\\
            &=\frac{e^{-\mu[(n+2)\tau - T]}(e^{\mu\tau}-1)}{\mu\tau}.
\end{split}
\end{equation*}

\end{proof}
  
The Event Detection and Confirmation problem seeks to maximize the number of true events that are confirmed. So, one would want to maximize the probability of confirmed true events given in (\ref{prob_confirm}). However, this expression is just for a single vertex on the path of the robot. We will extend it to the complete path, and then try to maximize the probability of confirming true events over the whole graph.

\section{A Single Robot Policy Based on the TSP}

The expression~\eqref{prob_confirm} can be used to find the probability of confirming true events over the whole graph. In this section, we will derive the expression for the probability, and then use it in a special case to recommend a policy based on the TSP tour of the graph. 

\subsection{Probability of Correct Classification for the Tour}
We start with the analysis of any patrolling policy with possibly different periodic visit times to vertices, and then specialize the equation for the case when the periodic visit times to the vertices are equal and the events' activity period is governed by the same process for all the vertices.  

Using equation (\ref{prob_confirm}), for a vertex $v$ with arrival and departure rates given by $\lambda_v$ and $\mu_v$ respectively, the robot visiting that vertex with a period $\tau_v$ will confirm true events on that vertex with a probability given by 

\begin{equation}
\label{eq:confirm|v}
\begin{split}
\prb{\texttt{confirm|v}}=\frac{e^{-\mu[(n_v+2)\tau_v - T]}(e^{\mu_v\tau_v}-1)}{\mu_v\tau_v},\\ \text{ where } 
  n_v=\begin{cases}
    \frac{T}{\tau_v}-1, & \text{if $T$ is a multiple of $\tau_v$}\\
    \big\lfloor\frac{T}{\tau_v}\big\rfloor, & \text{otherwise},
  \end{cases}
  \end{split}
\end{equation}

\begin{proposition}[Probability Expressions]
The probability of correctly classifying true events for the periodic tour is given by
\begin{equation}
\label{eq:diff_lmbda}
\prb{\texttt{confirm}}=\frac{\sum_{v}\prb{\texttt{\emph{confirm|v}}}\lambda_v}{\sum_{v}\lambda_v},
\end{equation} 
where $\prb{\texttt{\emph{confirm|v}}}$ is given in equation~(\ref{eq:confirm|v}).  Moreover, in the special case where $\tau_v=\tau$, and $\mu_v=\mu$, for all $v\in V$, then 
\begin{equation}
\label{prb_tsp}
\begin{split}
\prb{\texttt{confirm}}=\frac{e^{-\mu[(n+2)\tau - T]}(e^{\mu\tau}-1)}{\mu\tau},\\ \text{ where } 
  n=\begin{cases}
    \frac{T}{\tau}-1, & \text{if $T$ is a multiple of $\tau$}\\
    \big\lfloor\frac{T}{\tau}\big\rfloor, & \text{otherwise}.
  \end{cases}
  \end{split}
\end{equation}
\end{proposition}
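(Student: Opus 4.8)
The plan is to prove the two assertions in order: first the mixture formula~\eqref{eq:diff_lmbda} for an arbitrary periodic policy, obtained by a single application of the law of total probability, and then its collapse to~\eqref{prb_tsp} under uniform visit periods and departure rates, which follows in one line.

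For~\eqref{eq:diff_lmbda} I would first fix the meaning of $\prb{\texttt{confirm}}$: it is the probability that a true event, chosen at random from the true events produced over a long horizon, is correctly classified --- equivalently, the long-run fraction of true events that get confirmed. Letting $V^\star$ be the (random) vertex at which such an event occurs,
\begin{equation*}
\prb{\texttt{confirm}}=\sum_{v\in V}\prb{\texttt{confirm}\mid V^\star=v}\;\prb{V^\star=v}.
\end{equation*}
The conditional factor is precisely the single-vertex quantity already computed: since the policy revisits $v$ periodically with period $\tau_v$ and arrivals and activity there are governed by $\lambda_v,\mu_v$, the analysis of Section~\ref{sec:single_vertex} gives $\prb{\texttt{confirm}\mid V^\star=v}=\prb{\texttt{confirm|v}}$ as in~\eqref{eq:confirm|v}, which depends only on $\mu_v,\tau_v,T$. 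For the weight $\prb{V^\star=v}$ I would invoke the independence of the per-vertex arrival streams and the superposition/thinning property of the Poisson process: among all events the probability that one occurs at $v$ is $\lambda_v/\sum_u\lambda_u$, and since the fraction of events at any vertex that are true is $e^{-\mu_v T}$ --- the same across vertices when the $\mu_v$ agree --- this proportionality carries over to true events, so $\prb{V^\star=v}=\lambda_v/\sum_u\lambda_u$. Substituting the two ingredients yields~\eqref{eq:diff_lmbda}.

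The specialization is then immediate: with $\tau_v=\tau$ and $\mu_v=\mu$ for all $v$, every $n_v$ equals the common $n$ of~\eqref{prb_tsp}, so the right-hand side of~\eqref{eq:confirm|v} takes one and the same value $c$ at every vertex (it never involved $\lambda_v$); hence $\prb{\texttt{confirm}}=\big(\sum_v c\,\lambda_v\big)/\big(\sum_v\lambda_v\big)=c$, which is~\eqref{prb_tsp}. I expect the only real obstacle to be justifying the mixing weights, and in particular the mild subtlety that the per-vertex event streams are not literally independent Poisson processes (a new event at $v$ can start only once the previous one there has gone inactive); this can be absorbed either by arguing in the regime where vertices are rarely occupied, so the correction is negligible, or by replacing $\lambda_v$ throughout with the true renewal rate of events at $v$ and observing that these rates enter~\eqref{eq:diff_lmbda} in the stated proportions whenever the $\mu_v$ coincide. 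Plugging~\eqref{eq:confirm|v} into the mixture and collapsing the sum is otherwise routine.
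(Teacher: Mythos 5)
Your proposal is correct and follows essentially the same route as the paper: apply the law of total probability over the arrival vertex, take $\prb{\texttt{arrival at v}}=\lambda_v/\sum_u\lambda_u$ from the independence of the per-vertex arrival streams, substitute the single-vertex result, and in the uniform case factor the common value out of the sum. The only difference is that you are more careful about two points the paper silently skips --- that the mixture should really be over \emph{true} events, so the weights are $\lambda_v e^{-\mu_v T}/\sum_u\lambda_u e^{-\mu_u T}$ and reduce to $\lambda_v/\sum_u\lambda_u$ only when the $\mu_v$ agree (a caveat the general formula~\eqref{eq:diff_lmbda} does not state, though it is harmless for the specialization to~\eqref{prb_tsp}), and that the per-vertex streams are not literally Poisson because of the one-event-at-a-time blocking.
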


\begin{proof}

We want to remove the condition of arrival being on a certain vertex $v$ from equation (\ref{eq:confirm|v}). We know that

\begin{equation*}
\prb{\texttt{confirm}}=\sum_{v}\prb{\texttt{confirm|v}}\prb{\texttt{arrival at v}}.
\end{equation*}

Since the arrivals of events at different vertices are independent processes, the probability of arrival of an event being on vertex $v$ is 
\begin{equation*}
\prb{\texttt{arrival at v}}=\frac{\lambda_v}{\sum_{v}\lambda_v}.
\end{equation*}

Therefore, 
\begin{equation*}
\prb{\texttt{confirm}}=\frac{\sum_{v}\prb{\texttt{confirm|v}}\lambda_v}{\sum_{v}\lambda_v}.
\end{equation*}

If we consider the special case when $\mu_v=\mu, \tau_v=\tau, \forall v\in V$, then $\prb{\texttt{confirm|v}}$ is same for all the vertices, and can be factored out of the summation, giving us equation (\ref{prb_tsp}).

\end{proof}

\begin{remark}[Dependence on $\lambda$]
Notice that the probability expression (\ref{prb_tsp}) is independent of $\lambda$. The reason can be understood by noting that in equation (\ref{eq:diff_lmbda}), the number of confirmed events depends on $\lambda$, but so does the number of total events.  As a result, the probability remain unaffected. \oprocend
\end{remark}

\subsection{Policy based on a TSP tour}
Expression~\eqref{prb_tsp} holds for the case when the period of visits for the robot is same for all vertices and the activity times of events at different vertices are identically distributed. A TSP tour (which is the shortest tour to visit all vertices in the graph) minimizes the time $\tau$ for a given speed of the robot.  

However, there are cases when decreasing the robot speed and thus increasing $\tau$ results in a higher probability in expression~\eqref{prb_tsp}. This is due to the discontinuity of $n$ in equation~\eqref{eq:n_def} and can be seen in Figure~\ref{fig:plot}. Intuitively it means that timing the visits such that $T$ is a multiple of $\tau$ tends to decrease the chances of missing the confirmation of true events. Thus, we need to check two possible robot speeds:  maximum speed, or moving at a speed such that $\tau$ is a multiple of $T$.  Based on this observation, we arrive at following single robot policy.  \\

\begin{figure}
  \centering
 \includegraphics[width=\linewidth]{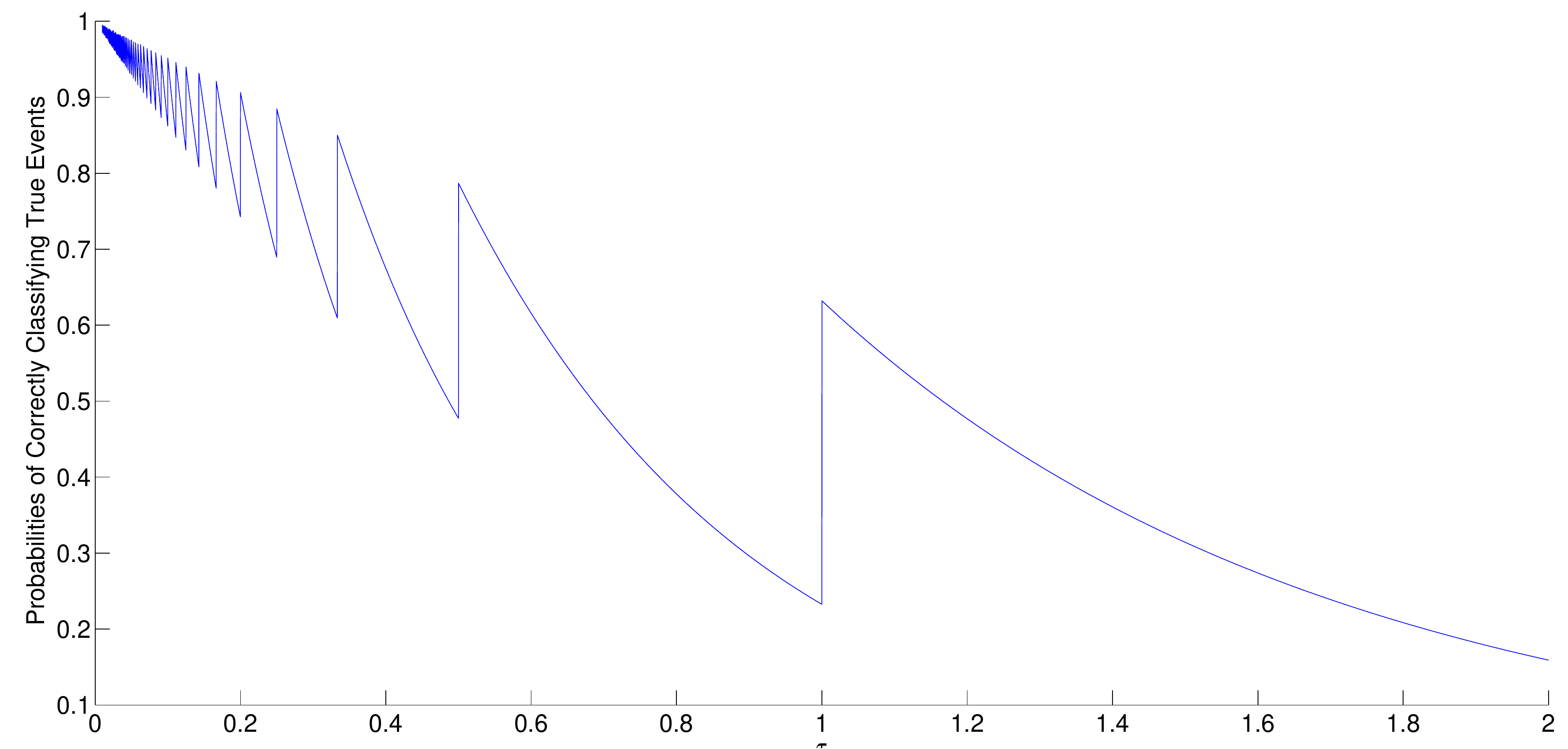}
  \caption{The probability of correctly classifying true events versus $\tau$, with $T=1$, $\mu=1$.}
  \label{fig:plot}
\end{figure}

\textbf{Policy for a Single Robot:}
\begin{enumerate}
\item Calculate the TSP tour of the graph, and find the minimum time $\tau_{\min}$ to complete that tour by the robot at its maximum speed.
\item Find $\tau_{\mathrm{peak}}\geq\tau_{\min}$ such that $T$ is a multiple of $\tau_{\mathrm{peak}}$ and there is no other $\tau$ between $\tau_{\min}$ and $\tau_{\mathrm{peak}}$ which is a multiple of $T$.
\item Calculate the probability of correctly classifying true events from equation (\ref{prb_tsp}) for $\tau=\tau_{\min}$ and $\tau=\tau_{\mathrm{peak}}$.
\item Choose the tour period which gives greater probability and adjust the speed of the robot to the optimal speed to match the chosen period. 
\end{enumerate}

\begin{remark}[Omitting Vertices from Tour]
Equation (\ref{eq:diff_lmbda}) suggests that missing some vertices on the tour can give a better probability. An instance of the problem with equal arrival rates can be easily constructed where missing a far away vertex from the tour will result in a much lower $\tau$ for the other vertices and hence increase the probability of correctly classifying true events over the whole graph. However, such policies raise the possibility that ``intelligent'' events would begin to choose this unvisited vertex more frequently, altering the arrival rates.  This becomes a problem in game theory, and thus we leave it for future work. \oprocend
\end{remark}

\section{Multiple Robots}
\label{sec:multiple_robots}

% \todoi{can this result be extended to $n_1$ robots on path $P_1$ and $n_2$ robots on path $P_2$? Also, possibly create two subsections, one for two robots, and one for multiple.}

In this section we consider the case of multiple robots.  We assume that the communication graph between the robots is strongly connected, so that any two robots can communicate without significant delay. Thus, we can assume that the database containing all active events is shared among the robots.  This means that it is possible to have robot $i$ detect an event and robot $j$ confirm it.

\subsection{Specializing Robot Capabilities}

One possible solution in the multi-robot case is to utilize \emph{specialization} in which a robot performs exclusively detection, or exclusively confirmation.  
\begin{definition}[Specialized robot capability]
  We say that a robot is a detection (confirmation) robot if it is capable of performing only event detections (confirmations).
\end{definition}

First, it is easy to see that specialization cannot be optimal.  In specializing we eliminate the possibility of a confirmation robot detecting an event, even if it is the first robot to visit the vertex after the events' arrival.  Similarly, we eliminate the possibility of a detection robot confirming an event, even when it visits the event vertex more than $T$ time units after detection.  

However, there are cases where specialization may be required, for example, if the sensors needed for detection and confirmation differ.  The following simple lemma shows that when specializing, detection is the bottleneck.

\begin{lemma}[Specialization among robots]
\label{lem:conf_det}
Given $n_d$ detection robots, confirmation can be performed optimally using only $n_d$ confirmation robots.  
\end{lemma}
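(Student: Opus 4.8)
The plan is to prove the lemma by a direct construction: given any fixed paths for the $n_d$ detection robots, I will exhibit paths for $n_d$ confirmation robots that confirm every detected event which is confirmable by \emph{any} number of confirmation robots, so that no gain is possible by deploying more than $n_d$ of them. First I would fix the detection robots' paths (they may even be adapted online, since the shared database lets a confirmation robot reconstruct what a detection robot did in the past). These paths determine, for every event that is ever detected, its detection vertex $v$ and its detection time $\td$; since a detection is the first visit to $v$ after the event's arrival, its active interval $[t_s,t_f]$ satisfies $t_s \le \td$.

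Second, I would assign to each detection robot $j$ a dedicated confirmation robot that \emph{shadows} robot $j$: it starts at the same vertex, waits there for time $T$, and then follows exactly the same route at the same speed. Hence whenever detection robot $j$ is at a vertex $v$ at time $\td$, the shadowing robot is at $v$ at time $\td + T$.

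Third --- and this is the crux --- I would argue optimality. An event detected at $(v,\td)$ with active interval $[t_s,t_f]$ can be confirmed by some confirmation robot, under \emph{any} deployment, if and only if that robot visits $v$ at some time in the window $[\td + T,\, t_f]$: the revisit must occur at least $T$ after detection and while the event is still active, and any such revisit automatically certifies the event as true, since the event has then been active for at least $(\td + T) - t_s \ge T$. Now the shadowing robot visits $v$ precisely at time $\td + T$. If $\td + T \le t_f$, this visit falls in the window, the event is still active, the timestamp check gives a gap of exactly $T \ge T$, and the event is confirmed. If $\td + T > t_f$, the window is empty and no confirmation robot at all can confirm the event. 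Therefore the $n_d$ shadowing robots confirm exactly the set of detected, confirmable events --- a set that no larger collection of confirmation robots can exceed --- which establishes that $n_d$ confirmation robots achieve the optimum.

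I expect the main obstacle to be the third step: showing that the single shadow visit at the precise time $\td + T$ is simultaneously late enough (gap $\ge T$, and the event genuinely true) and early enough (inside $[\td + T, t_f]$ whenever confirmation is possible at all), and arguing carefully that extra robots, or cleverer non-shadowing schedules, cannot confirm anything the shadow misses. Some bookkeeping is also needed for vertices carrying several events with disjoint active intervals, or a vertex detected by one robot and later re-seen by another; but each case reduces to the same per-event interval argument, so I do not expect it to cause real difficulty.
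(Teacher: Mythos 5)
Your proposal is correct and uses essentially the same construction as the paper: place one confirmation robot on each detection path with a time lag of exactly $T$, so that it arrives at each detection vertex at the earliest admissible confirmation time $\td+T$, which no alternative deployment can beat. Your third step merely spells out the optimality argument that the paper states more tersely.
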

\begin{proof}
  Let the paths followed by the $n_d$ detection robots be $P_1,\ldots,P_{n_d}$.  We then create $n_d$ confirmation paths by placing a confirmation robot on each detection path, but with a time lag of exactly $T$ seconds.  

An event that is detected on a given path $P_i$ will be confirmed exactly $T$ time units later by the corresponding confirmation robot.  In other words, given the detection times, each confirmation will be performed optimally using the $n_d$ confirmation paths, with a time lag of $T$.
\end{proof}

The consequence of this result is that detection is the bottleneck when looking at specialized robots. Given detection patrolling paths, the corresponding optimal confirmation paths are defined.  Thus, in this case, one can use existing techniques to design patrolling paths for detection, and then use Lemma~\ref{lem:conf_det} for the confirmation paths.  In the next section we focus on the more complex case in which each robot can both detect and confirm.

% %

\begin{figure*}[!t]
% ensure that we have normalsize text
\normalsize
% Store the current equation number.
\newcounter{MYtempeqncnt}
\setcounter{MYtempeqncnt}{\value{equation}}
\setcounter{equation}{9}
\begin{equation}
\label{eq:case1}
\prb{\texttt{confirm}}=\begin{cases}
   						 \frac{1}{\mu\tau}(e^{-\mu((n+1)\tau-T)}(e^{\mu \tl}(1-e^{-\mu \tau})), & \tl\leq T-n\tau, \\
    					 \frac{1}{\mu\tau}(e^{-\mu((n+1)\tau-T)}(e^{\mu \tl}+e^{\mu(\tau-\tl)}-2), & T-n\tau<\tl\leq(n+1)\tau-T, \\
    					 \frac{1}{\mu\tau}(e^{-\mu((n+1)\tau+\tl-T)}(e^{\mu\tau}-1), & \tl>(n+1)\tau-T. \\
  					\end{cases}
\end{equation}

\begin{equation}
\label{eq:case2}
\prb{\texttt{confirm}}=\begin{cases}
   						 \frac{1}{\mu\tau}(e^{-\mu((n+1)\tau-T)}(e^{\mu \tl}(1-e^{-\mu \tau})), & \tl\leq(n+1)\tau-T, \\
    					 \frac{1}{\mu\tau}(e^{-\mu((n+1)\tau-T)}(2-e^{-\mu \tl}-e^{-\mu(\tau-\tl)}), & (n+1)\tau-T<\tl\leq T-n\tau, \\
    					 \frac{1}{\mu\tau}(e^{-\mu((n+1)\tau+\tl-T)}(e^{\mu\tau}-1), & \tl>T-n\tau. \\
  					\end{cases}
\end{equation}
% Restore the current equation number.

% IEEE uses as a separator
\hrulefill
\setcounter{equation}{8}
% The spacer can be tweaked to stop underfull vboxes.
\vspace*{4pt}
\end{figure*}

% %

\subsection{Optimal Spacing Between Robots on a Common Path}

In this section we look at the case where each robot can both detect and confirm events.  We focus on the special case in which there are two robots moving along the same tour, with a period of $\tau > 0$.   We seek the optimal spacing of these two robots along the tour.  We discuss the extension to $m$ robots at the end of this section.

To this end, define the variable to optimize as $\tl$ which is the time lag between the first and second robot on the common tour. Since the robots travel the tour with period $\tau$, we have $\tl \in (0, \tau)$. Consider an event that arrives at a vertex at time $t' \geq 0$.  We can shift the time scale such that $t'\in [0, \tau)$. 

Then, let us consider the earliest possible time that this event can be detected and confirmed:  we call these times $\td$ and $\tc$, respectively, where $\tc \geq \td + T$ and $\td \geq t'$.

If these times are known, then the probability of confirming a true event, given that it arrives at time $t'$ is
\begin{align}
\label{eq:det_conf}
\prb{\texttt{confirm}|t'} &= \frac{\prb{\texttt{active} > \tc - t'}}{\prb{\texttt{active} > T}} \nonumber \\
&= \frac{e^{-\mu(\tc - t')}}{e^{-\mu T}} \nonumber \\
&= e^{\mu t'} e^{-\mu(\tc - T)},
\end{align}
where we have used the fact that an event's active time is distributed according to an exponential random variable with parameter $\mu$. Also, recall that $n$ is defined in equation~\eqref{eq:n_def}. Now, we can calculate $\td$ and $\tc$ as a function of $\tl$ using the following two cases, each containing two sub-cases. \\

\noindent \emph{Case 1:}  If $t'\in (0,\tl]$ then the event will be detected at $\td = \tl$.  There are two further sub-cases.
\begin{enumerate}
\item If $\tl+T \leq (n+1)\tau$ then the earliest time that the event can be confirmed is  $\tc = (n+1)\tau$. 
\item If $\tl+T > (n+1)\tau$, then the earliest time that the event can be confirmed is  $\tc = (n+1)\tau+\tl$.
\end{enumerate}

\noindent\emph{Case 2:}  If $t' \in (\tl,\tau]$ then the event will be detected at $\td = \tau$.  Again, there are two sub-cases:
\begin{enumerate}
\item  If $\tau + T \leq (n+1)\tau + \tl$ i.e., $\tl \geq T - n\tau$, then the earliest time that the event can be confirmed is $\tc = (n+1)\tau + \tl$.

\item If $\tl < T - n\tau$ , then the earliest time that the event can be confirmed is $\tc = (n+2)\tau$.
\end{enumerate}

Based on the four cases and equation~\eqref{eq:det_conf}, we can compute the probability of detection as a function of $\tl$ as 
\begin{equation*}
\prb{\texttt{confirm}}=\int_{0}^{\tau}\prb{\texttt{confirm}|t'}f(t')dt',
\end{equation*}  
where $f(t')$ is the uniform distribution from equation (\ref{eq:arrival_dist}). 

When evaluating this integral, there are two more cases, depending on whether or not  $T-n\tau  \leq (n+1)\tau-T$: 
\begin{itemize}
\item If $T-n\tau  \leq (n+1)\tau-T$ we get equation~\eqref{eq:case1}.
\item If $T-n\tau > (n+1)\tau-T$, we get equation~\eqref{eq:case2}.
\end{itemize}
Now, from these expressions we can optimize $\tl$.   Notice that in the case when $\tl = \tau/2$, the expressions in~\eqref{eq:case1} and~\eqref{eq:case2} both simplify to equation~\eqref{prb_tsp} with $\tau$ replaced by~$\tau/2$.    

\begin{proposition}[Optimal value of $\tl$]
The equations~\eqref{eq:case1} and (\ref{eq:case2}) achieve their global maxima at one (or more) of the following points: i) $\tl=\frac{\tau}{2}$;  ii) $\tl=T-n\tau$; or iii) $\tl=(n+2)\tau-T$.
\end{proposition}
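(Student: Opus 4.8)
The plan is to exploit three structural features of~\eqref{eq:case1} and~\eqref{eq:case2}. First, a reflection symmetry: $\prb{\texttt{confirm}}(\tl)=\prb{\texttt{confirm}}(\tau-\tl)$, because interchanging the two robots on the common tour replaces the offset $\tl$ by $\tau-\tl$ while leaving the doubly-infinite pattern of visit times (gaps alternating $\tl,\tau-\tl,\tl,\dots$) unchanged, and the arrival time is uniform on one period by~\eqref{eq:arrival_dist}; equivalently, $\tl\mapsto\tau-\tl$ swaps the first and third branches of~\eqref{eq:case1}--\eqref{eq:case2} (note $\tau-(T-n\tau)=(n+1)\tau-T$) and fixes the middle one. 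So it suffices to maximize over $\tl\in(0,\tfrac{\tau}{2}]$, the set of global maximizers over $(0,\tau)$ being this together with its mirror image.

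Second, I would classify the shape of each branch. Up to a positive multiplicative constant, every branch of~\eqref{eq:case1} and~\eqref{eq:case2} equals one of $e^{\mu\tl}$, $e^{-\mu\tl}$, $e^{\mu\tl}+e^{\mu(\tau-\tl)}-2$, or $2-e^{-\mu\tl}-e^{-\mu(\tau-\tl)}$. The first two are strictly monotone; the third is strictly convex with unique minimum at $\tl=\tfrac{\tau}{2}$ (its derivative is $\mu e^{\mu\tl}-\mu e^{\mu(\tau-\tl)}$); the fourth is strictly concave with unique maximum at $\tl=\tfrac{\tau}{2}$. Hence in the interior of any branch the function is either monotone, or convex (so it has no interior maximum), or concave with its only interior maximum at $\tl=\tfrac{\tau}{2}$. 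The only branch boundaries inside $(0,\tau)$ are $\tl=T-n\tau$ and $\tl=(n+1)\tau-T$ (both lie in $[0,\tau]$ since $n\tau<T\le(n+1)\tau$ by~\eqref{eq:n_def}, and they are reflections of each other about $\tfrac{\tau}{2}$). Combining, a maximizer must be $\tl=\tfrac{\tau}{2}$ or one of these two boundary points -- exactly the locations in the statement. One also reads off which case occurs: when $T-n\tau\le(n+1)\tau-T$ the middle branch is convex, so the maximum sits at $\tl=T-n\tau$ or $\tl=(n+1)\tau-T$; in the opposite regime it is concave, so the maximum is at $\tl=\tfrac{\tau}{2}$ (consistent with ``one or more'', and with the fact that $\tl=\tfrac{\tau}{2}$ reduces~\eqref{eq:case1}--\eqref{eq:case2} to~\eqref{prb_tsp} with $\tau$ replaced by $\tau/2$).

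The step I expect to be the main obstacle is showing that the supremum is genuinely \emph{attained} among these finitely many points. Care is needed because $\prb{\texttt{confirm}}$ is discontinuous precisely at $\tl=T-n\tau$ and $\tl=(n+1)\tau-T$: an arbitrarily small change in $\tl$ can change which visit first reconfirms an event, and hence change $\tc$ for a positive-measure set of arrival times $t'$. The plan is to track the direction of each jump from the $\td,\tc$ casework preceding~\eqref{eq:case1} (across each boundary $\tc$, and therefore $\prb{\texttt{confirm}}$ via~\eqref{eq:det_conf}, moves monotonically), and to note that the physical definition of $\tc$ -- the earliest visit time with $\tc\ge\td+T$ -- makes $\prb{\texttt{confirm}}$ take the larger of its two one-sided limits at each boundary, i.e.\ $\prb{\texttt{confirm}}$ is upper semicontinuous there. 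Since $\prb{\texttt{confirm}}$ is increasing near $\tl=0$ and (by symmetry) decreasing near $\tl=\tau$, the maximum lives on a compact subinterval of $(0,\tau)$, where upper semicontinuity forces it to be attained -- necessarily at $\tl=\tfrac{\tau}{2}$, $\tl=T-n\tau$, or $\tl=(n+1)\tau-T$. The remaining work (writing out the six branches, checking the sign of each coefficient so the monotone/convex/concave labels are right, and pinning down the jump directions) is routine but error-prone calculation.
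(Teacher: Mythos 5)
Your core argument is the same as the paper's: classify each of the three branches of \eqref{eq:case1}--\eqref{eq:case2} as strictly monotone, convex, or concave with stationary point at $\tl=\tau/2$, and conclude that a maximizer must sit at $\tau/2$ or at a branch boundary. What you add, and what the paper omits, is (a) the reflection symmetry $\tl\mapsto\tau-\tl$ (which indeed swaps the first and third branches and fixes the middle one --- a quick computation using $(n+1)\tau-T=\tau-(T-n\tau)$ confirms this) and (b) the attainment question at the jump discontinuities; the paper simply asserts that the monotone pieces ``achieve their maximum values at the discontinuities'' without noticing that this needs an argument. Your attainment argument is essentially right but one claim is imprecise: the function as written in \eqref{eq:case1} is \emph{not} upper semicontinuous at both boundaries. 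Writing $a=T-n\tau$, the point $\tl=a$ is assigned to the first (increasing) branch, whose value there falls short of the right-hand limit of the middle branch by $\frac{1}{\mu\tau}e^{-\mu(\tau-a)}\bigl(e^{\mu(\tau-a)/2}-e^{-\mu(\tau-a)/2}\bigr)^{2}>0$, so the supremum is not attained at $\tl=a$. It is, however, attained at the mirror point $\tl=(n+1)\tau-T$, which does belong to the middle branch and, by the symmetry of that branch about $\tau/2$, carries the same value; so your symmetry observation rescues the conclusion and the proposition's ``one (or more) of the following points'' remains correct. Finally, note that you consistently (and correctly, per the case boundaries in \eqref{eq:case1}--\eqref{eq:case2}) use $(n+1)\tau-T$ where the proposition and the paper's proof write $(n+2)\tau-T$; the latter lies outside $(0,\tau)$ and appears to be a typo, so your version is the one that should be trusted.
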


\begin{proof}
The equations~\eqref{eq:case1} and (\ref{eq:case2}) are piecewise continuous and have discontinuities at points $\tl=T-n\tau$ and $\tl=(n+2)\tau-T$. The continuous pieces defined on the first and third interval of the equations are strictly monotone and achieve their maximum values at the discontinuities. The third continuous part has an extremum at $\tl=\frac{\tau}{2}$. Thus its maximum lies either at $\frac{\tau}{2}$ or at one of the discontinuities.  So, for one of these values of $\tl$, the probability will be maximized.
\end{proof}

These values of $\tl$ will optimize the probability for a given value of $\tau$. However, as we observed in the single robot case, decreasing the speed of the robot to increase $\tau$ to a divisor of $2T$ can result in a higher probability.  Based on this, we arrive at the following policy for two robots: \\

  \textbf{Policy for Optimizing the Spacing of Two Robots:}

\begin{enumerate}
\item Evaluate the expression~\eqref{eq:case1} or (\ref{eq:case2}) depending on whether $T-n\tau\leq(n+1)\tau-T$ or not, at the points $\tl =\tau/2$, $\tl=T-n\tau$ and $\tl=(n+2)\tau-T$. Choose the value of $\tl$ that gives the maximum probability. Call it $t_{\mathrm{lag_1}}$.
\item Decrease $\tau$ to the nearest divisor of $2T$, call it $\tau_n$ and evaluate equation (\ref{eq:case2}) at $\tl=\tau_n/2$ for $\tau=\tau_n$.
\item If $\tl=\tau_n/2$ gives the higher probability, choose $\tau=\tau_n$ and $\tl=\tau_n/2$. Otherwise, choose $\tl=t_{\mathrm{lag_1}}$.
\end{enumerate}

The following remark discusses the extension to $m$ robots.

\begin{remark}[Generalizing to $m$ robots]  In the case that there are $m$ robots, there are $m-1$ variables  $\tl$ to optimize.  The number of cases to consider becomes too large to complete the same analysis.  However, based on the observations made for two robots, the following can be said for the $n$-robot case: 
\begin{enumerate}
\item If $\tau/m$ is a multiple of $T$, then equally space the robots on the path. \item Otherwise, if $\tau<mT$, decrease $\tau$ to the nearest divisor of $mT$ and using this new period, equally space the robots. 
\item If $\tau>mT$, choose the spacing such that the robots follow each other by a time lag of $T$.
\end{enumerate}
This policy follows from the observation that in the two robot case this procedure often yields the optimal $\tl$.  However, it is not, in general guaranteed to find the optimal spacing.   \oprocend
 % \todoi{fill in this and alter as needed} 
\end{remark}

\section{Application Example}

Let us consider the parking lot shown in Figure~\ref{fig:parking_example} and apply the patrolling policies to that parking lot. The parking lot is situated at a market place and for the purposes of simulation, we assume the expected staying time of vehicles to be around 75 minutes and the allowed parking time to be two hours. This gives $\mu=\frac{1}{75}$ and $T=120$. Moreover, the length of the patrolling path shown in Figure~\ref{fig:parking_example} is calculated to be approximately 870 meters. 

Figure~\ref{fig:example1} shows the plot of probabilities of ticketing overstaying vehicles versus patrolling period of the robot in the cases of i) a single robot; ii) two robots with a spacing of $\tau/2$; and iii) two robots with optimized spacing.

We assume the robot has a maximum speed of $1$ m/s, which gives a minimum tour period of $14.5$ minutes. The probability of ticketing for a tour with this period comes out to be $0.7905$ from equation~\eqref{prb_tsp}. However, if we increase the period to 15 minutes by decreasing the speed of robot to $0.967$ m/s, using equation~\eqref{prb_tsp} the probability increases to $0.9063$  --- an increase of $14.6\%$. 

In case of two robots, the probability of ticketing an overstaying vehicle using a period of 14.5 minutes and the robots equally spaced will be 0.9128 from equation~\eqref{eq:case2} .  But, if one robot follows the other with a time lag of $T-n\tau=4$ minutes or $(n+1)\tau-T=10.5$ minutes (since the robots are on a cycle, these configurations are equivalent), then the probability increases to 0.9221 from equation~\eqref{eq:case1}.
However, there is still room for improvement. If both the robots decrease their speed to make their period a multiple of $T$ and then follow each other with a lag of $\tau/2=7.5$ minutes, the probability will be 0.9515 which can be calculated using equation~\eqref{eq:case2} or equation~\eqref{prb_tsp} with $\tau$ replaced by $\tau/2$. 

This example shows that decreasing the speed can be helpful in terms of increasing the probability of confirming a true event. However, it may not always be true. Looking at Figure~\ref{fig:example1} that most of the times a lag of $\tau/2$ would result in better probability, and even if it does not, the optimal lag seems to provide very little advantage. However, Figure~\ref{fig:example2} shows that in the cases when $\tau>2T$ in a two robot scenario, optimal lag provides much better results. Such cases can arise when robots have to monitor very large environments.

\begin{figure}
  \centering
 \includegraphics[width=\linewidth]{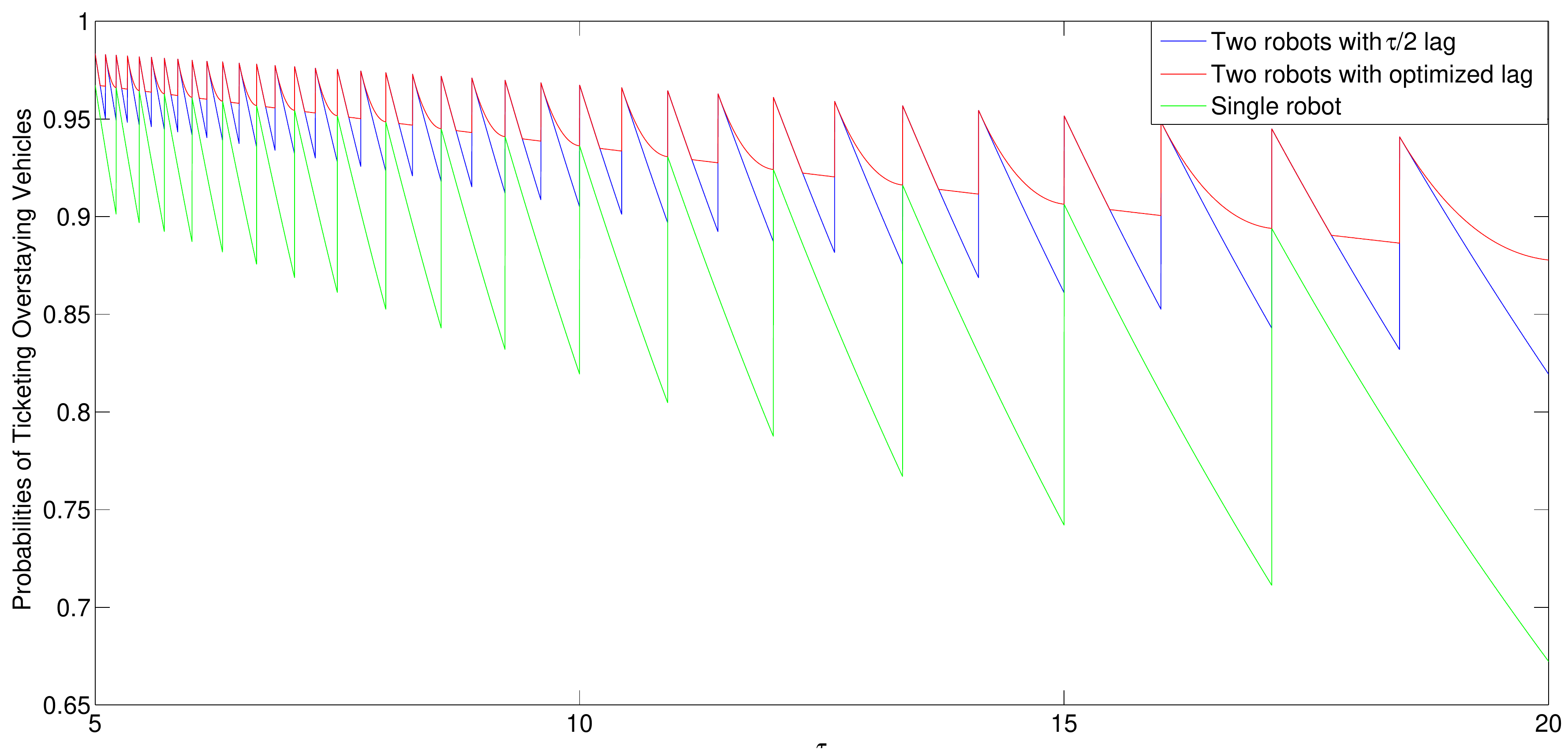}
  \caption{The probability of ticketing overstaying vehicles as a function of $\tau$ for 1) a single robot, 2) two robots with lag of $\tau/2$, and 3) two robots with optimized lag.  Here $T=120$, and $\mu=1/75$.}
  \label{fig:example1}
\end{figure}

\begin{figure}
  \centering
 \includegraphics[width=\linewidth]{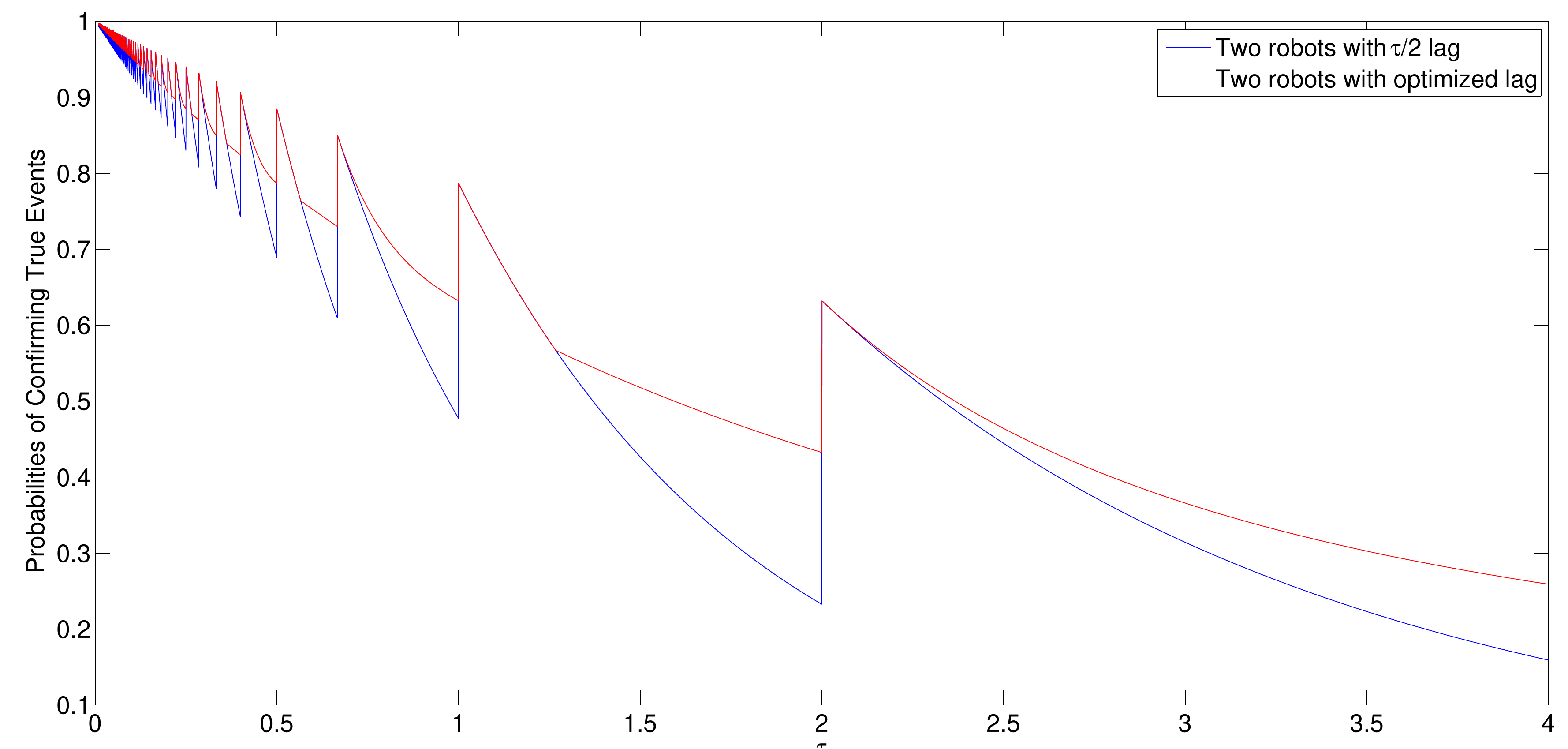}
  \caption{Comparison of probabilities of confirming events for two robots with different lags versus $\tau$.  Parameter values are $T=1$, $\mu=1$.}
  \label{fig:example2}
\end{figure}

\section{Conclusions and Future Work}
\label{sec:conclusions}

In this paper we considered a robot patrolling problem called the event detection and monitoring problem.  We showed that the off-line version of this problem is NP-hard.  We considered a simple patrolling policy based on the traveling salesman tour, and characterized the probability of confirming a true event.  We gave some initial insights into the multiple robot problem.  In particular we showed that the robots all follow the same path, the optimal spacing can be nonuniform, depending the time $T$ and the period $\tau$.  

For future work we would like to solve the dual problem, which is to minimize the number of robots needed to achieve a desired confirmation probability.  We would also like to compare the performance of a TSP tour with that of a min-max latency tour~\cite{Alamdari2014Persistent}, where visit frequency can be proportional to the event arrival rate at a given vertex.  We would also like to study the problem from a game theoretic perspective, where the events distribution may change as a function of the patrolling policy.  For example, where people begin to alter their parking behaviors based on the patrolling path.  In this case we need to look at randomizing the path, in order to decrease its predictability.  Randomized policies have been considered in perimeter patrolling problems~\cite{Agmon2008Multi}, and thus will form a solid basis on which to build.

% Can study the entropy vs performance for the randomized path, once it is formulated

\bibliographystyle{IEEEtran}
\bibliography{references}

\end{document}